\relax
\documentclass[letterpaper]{article} 
\usepackage{times}  
\usepackage{helvet} 
\usepackage{courier}  
\usepackage[hyphens]{url}  
\usepackage{graphicx} 
\urlstyle{rm} 
\usepackage{caption} 
\frenchspacing  
\setlength{\pdfpagewidth}{8.5in}  
\setlength{\pdfpageheight}{11in}  
\usepackage{hyperref} 

\usepackage[switch]{lineno}  


%

\usepackage{algorithm}
\usepackage[noend]{algpseudocode}
\usepackage{amsmath}
\usepackage{amssymb}
\usepackage{amsthm}
\newtheorem{theorem}{Theorem}
\newtheorem{definition}{Definition}
\newtheorem{lemma}{Lemma}
\usepackage{tikz}
\usepackage{subcaption}

\def \R {\mathbb R}
\def \Z {\mathbb Z}  

\def \x {\mathbf{x}} 
\def \xc {\x_c} 
\def \xd {\x_d} 
\def \X {X} 
\def \Xc {\X_c} 
\def \Xd {\X_d} 
\def \dc {d_c} 
\def \dd {d_d} 
\def \wc {{{\mathbf v}}} 
\def \wdd {{\mathbf w}} 
\def \obj {f} 
\def \surr {g} 
\def \y {y} 
\def \noise {\epsilon} 
\def \budget {N} 
\def \nb {D} 
\def \it {n} 
\def \k {k} 
\algrenewcommand\algorithmicrequire{\textbf{Input}}
\algrenewcommand\algorithmicensure{\textbf{Output}}

\hyphenation{Mi-Va-BO Hyper-Opt MVRSM Co-Ca-BO}



\title{Black-box Mixed-Variable Optimisation Using a Surrogate Model that Satisfies Integer Constraints}
\author{
    Laurens Bliek,
  Arthur Guijt,
Sicco Verwer,
Mathijs de Weerdt\\
}
\date{
Delft University of Technology,\\
              Faculty of Electrical Engineering, Mathematics and Computer Science, \\
              Van Mourik Broekmanweg 6, 
              2628 XE Delft,
              The Netherlands\\
              \url{l.bliek@tudelft.nl}
              }
\begin{document}


\maketitle

\begin{abstract}
A challenging problem in both engineering and computer science is that of minimising a function for which we have no mathematical formulation available, that is expensive to evaluate, and that contains continuous and integer variables, for example in automatic algorithm configuration. 
Surrogate-based algorithms are very suitable for this type of problem, but most existing techniques are designed with only continuous or only discrete variables in mind. 
Mixed-Variable ReLU-based Surrogate Modelling (MVRSM) is a surrogate-based algorithm 
that uses a linear combination of rectified linear units,
defined in such a way that (local) optima satisfy the integer constraints. 
This method
outperforms
the state of the art on several synthetic benchmarks with up to $238$ continuous and integer variables, and achieves competitive performance on two real-life benchmarks: XGBoost hyperparameter tuning and Electrostatic Precipitator optimisation.
\end{abstract}

\section{Introduction}

Surrogate modelling techniques such as Bayesian optimisation have a long history of success in optimising expensive black-box objective functions~\cite{movckus1975bayesian,jones1998efficient,mockus2012bayesian}.
These are functions that have no mathematical formulation available and take some time or other resource to evaluate, which occurs for example when they are the result of some simulation, algorithm or scientific experiment.
Often there is also randomness or noise involved in these evaluations.
By approximating the objective with a cheaper surrogate model, the optimisation problem can be solved more efficiently.

While most attention in the literature has gone to problems in continuous domains, recently solutions for combinatorial optimisation problems have started to arise~\cite{garrido2020dealing,baptista2018bayesian,bartz2017model,ueno2016combo,bliek2019black}.
Yet many problems contain a mix of continuous and discrete variables, for example material design~\cite{iyer2019data}, optical filter optimisation~\cite{yang2019towards},
and automated machine learning~\cite{hutter2019automated}.
The literature on surrogate modelling techniques for these types of problems is even more sparse than for purely discrete problems.
Discretising the continuous variables to make use of a purely discrete surrogate model, or applying rounding techniques to make use of a purely continuous surrogate model are both seen as common but inadequate ways to solve the problem~\cite{garrido2020dealing,ru2019bayesian}.
The few existing techniques that can deal with a mixed variable setting still have considerable room for improvement in accuracy or efficiency.
When the surrogate model is not expressive enough and does not model any interaction between the different variables, it will perform poorly, especially when many variables are involved.
On the other hand, most Bayesian optimisation techniques do model the interaction between all variables, but use a surrogate model that grows in size every iteration.
This causes those algorithms to become slower over time, potentially even becoming more expensive than the expensive objective itself.

Our main contribution is a surrogate modelling algorithm called Mixed-Variable ReLU-based Surrogate Modelling (MVRSM) that can deal with problems with continuous and integer variables efficiently and accurately.
This is realised by using a continuous surrogate model that:
\begin{itemize}
    \item models interactions between all variables,
    \item does not grow in size over time and can be updated efficiently, and
    \item has local optima that are located exactly in points of the search space where the integer constraints are satisfied.
\end{itemize}
The first point ensures that the model remains accurate, even for large-scale problems.
The second point ensures that the algorithm does not slow down over time.
Finally, the last point eliminates the need for rounding techniques, and also eliminates the need for repeatedly using integer programming as is done in~\cite{daxberger2019mixed}.

Besides the proposed algorithm, the contributions include a proof 
that the local optima of the proposed surrogate model are integer-valued in the intended variables.
We also include an experimental proof of the effectiveness of this method 
on several 
large-scale
synthetic benchmarks from related work
and on two real-life benchmarks: XGBoost hyperparameter tuning and Electrostatic Precipitator optimisation.

\section{Preliminaries}

This work considers the problem of finding the minimum of a mixed-variable black-box objective function $\obj: \R^{\dc} \times \Z^{\dd} \rightarrow \R$ that can only be accessed via expensive and noisy measurements 
$\y = \obj(\xc,\xd)+\noise$.
That is, we want to solve

\begin{align}
    \min_{\xc\in\Xc,\xd\in\Xd} &  f(\xc,\xd),
    \label{eq:mainproblem}
\end{align}

where $\dc$ is the number of continuous variables, $\dd$ the number of integer variables, $\noise \in \R$ is a zero-mean random variable with finite variance, and $\Xc \subseteq \R^{\dc}$ and $\Xd \subseteq \Z^{\dd}$ are the bounded domains of the continuous and integer variables respectively.
In this work, the lower and upper bounds of either $\Xc$ or $\Xd$ for the $i$-th variable are denoted $l_i$ and $u_i$ respectively.
Since $\Xd\subseteq \Z^{\dd}$, we call $\xd \in \Z^{\dd}$ the integer constraints.
Expensive in this context means that it takes some time or other resource to evaluate $\y$, as is the case in for example hyperparameter tuning problems~\cite{bergstra2013making} and many engineering problems~\cite{DONEpaper,ueno2016combo}.
Therefore, we wish to solve~\eqref{eq:mainproblem} using as few samples as possible.

The problem 
is 
usually solved with a surrogate modelling technique such as Bayesian optimisation~\cite{mockus2012bayesian}.
In this approach, the data samples 
$(\xc, \xd, \y)$
are used to approximate the objective $\obj$ with a surrogate model $\surr$.
Usually, $\surr$ is a machine learning model such as a Gaussian process, random forest or a weighted sum of nonlinear basis functions.
In any case, it has an exact mathematical formulation, which means that $\surr$
can be optimised with standard techniques as it is not expensive to evaluate and it is not black-box.
If $\surr$ is indeed a good approximation of the original objective $\obj$, it can be used to suggest new candidate points of the search space $\Xc\times\Xd$ where $\obj$ should be evaluated.
This happens iteratively, where in every iteration $\obj$ is evaluated, the approximation $\surr$ of $\obj$ is improved, and optimisation on $\surr$ is used to suggest a next point to evaluate $\obj$.

\section{Related work}

In Bayesian optimisation, Gaussian processes are the most popular surrogate model~\cite{mockus2012bayesian}.
On the one hand, these surrogate models lend themselves well to problems with only continuous variables, but not so much when they include integer variables as well.
On the other hand, there have been several recent approaches to develop surrogate models for problems with only discrete variables~\cite{garrido2020dealing,baptista2018bayesian,ueno2016combo,bliek2019black}.

The mixed-variable setting is not as well-developed, although
there are some surrogate modelling methods
that can deal with this.
We start by mentioning two well-known methods, namely SMAC~\cite{hutter2011sequential} and HyperOpt~\cite{bergstra2013making}, followed by more recent work, along with their strengths and shortcomings.
We end this section with recent work on discrete surrogate models that we make use of throughout this paper.

SMAC~\cite{hutter2011sequential} uses random forests as the surrogate model.
This captures interactions between the variables nicely, but the main disadvantage is that the random forests are less accurate in unseen parts of the search space, at least compared to other surrogate models.
HyperOpt~\cite{bergstra2013making} uses a Tree-structured Parzen Estimator as the surrogate model.
This algorithm is known to be fast in practice, has been shown to work in settings with over $200$ variables, and also has the ability to deal with conditional variables, where certain variables only exist if other variables take on certain values.
Its main disadvantage is that complex interactions between variables are not modelled.
Most other existing Bayesian optimisation algorithms have to resort 
to rounding or discretisation
in order to deal with the mixed variable setting, which both have their disadvantages~\cite{garrido2020dealing,ru2019bayesian}.

More recently, the CoCaBO algorithm was proposed~\cite{ru2019bayesian}, which is developed for problems with a mix of continuous and categorical variables.
It makes use of a mix of multi-armed bandits and Gaussian processes.
Another interesting new research direction is to combine the advantages of Gaussian processes and artificial neural networks~\cite{kim2020surrogate}, although more research is required to make this computationally feasible for larger problems.
Other research groups have focused their attention to multi-objective mixed-variable problems 
~\cite{yang2019towards,iyer2019data}.

Most of the methods mentioned here suffer from the drawback that the surrogate model grows while the algorithm is running, causing the algorithms to slow down over time.
This problem has been addressed and solved for the continuous setting in the DONE algorithm~\cite{DONEpaper} and for the discrete setting in the COMBO~\cite{ueno2016combo} and IDONE algorithms~\cite{bliek2019black} by making use of parametric surrogate models that are linear in the parameters.
The 
MiVaBO algorithm~\cite{daxberger2019mixed} is, to the best of our knowledge, the first algorithm that applies this solution to the mixed variable setting.
It relies on an alternation between continuous and discrete optimisation to find the optimum of the surrogate model.

In contrast with MiVaBO, 
the IDONE algorithm has
the theoretical guarantee that any local minimum of the surrogate model satisfies the integer constraints, so only continuous optimisation needs to be used.
This is achieved by using a surrogate model consisting of a linear combination of rectified linear units (ReLUs), a popular basis function in the machine learning community.
Using only continuous optimisation is much more efficient than the approach used in MiVaBO.
However, this theory only applies to problems without continuous variables.

\section{Mixed-Variable ReLU-based Surrogate Modelling}\label{sec:alg}


In this section, we use the theory from
the IDONE algorithm
to develop a ReLU-based surrogate model for the mixed-variable setting.
This is far from trivial, as a wrong choice of surrogate model might result in limited interaction between all variables, in not being able to optimise the surrogate model efficiently, or in not being able to satisfy the integer constraints.

Below we present
the Mixed-Variable ReLU-based Surrogate Modelling (MVRSM) algorithm.
This algorithm makes use of
a surrogate model based on rectified linear units
and includes
interactions between all variables, is easy to update and to optimise, and has its local optima situated in points that satisfy the integer constraints.

\subsection{Proposed surrogate model}

As in related work~\cite{CDONEpaper,bliek2019black,daxberger2019mixed}, we use a continuous surrogate model $\surr:\R^{\dc+\dd}\rightarrow \R$:

\begin{align}
    \surr(\xc,\xd) & = \sum_{\k=1}^{\nb} c_\k \phi_\k(\xc,\xd), \label{eq:surrdef}
\end{align}
with $\nb$ being the number of basis functions.
The model is linear in its own parameters $c$, which  allows it to be trained with linear regression.
We choose the basis functions $\phi$ in such a way that all local optima $(\xc^*,\xd^*)$ of the model satisfy $\xd \in \Z^{\dd}$, as explained later in this section.
This leads to an efficient way of finding the minimum of the surrogate model for mixed variables.
We choose rectified linear units as the basis functions:

\begin{align}
    \phi_\k(\xc,\xd) & =\max\{0,z_\k(\xc,\xd)\},\label{eq:basisf}\\
    z_\k(\xc,\xd) & = [\wc_{\k}^T \wdd_{\k}^T]\left[\begin{array}{l}\xc\\ \xd\end{array}\right] + b_\k, \label{eq:zf} 
\end{align}
with $\wc_{\k}\in \R^{\dc}$, $\wdd_{\k} \in \R^{\dd}$, and $b_k\in \R$. 
This causes the surrogate model $\surr$ to be piece-wise linear.
There are four strategies for choosing the model parameters $\wc_\k, \wdd_\k, b_\k$: 
\begin{itemize}
    \item optimise them together with the weights $c_\k$, 
    \item choose them directly according to the data samples in a non-parametric way using kernel basis functions~\cite{mockus2012bayesian,ru2019bayesian},
    \item choose them randomly once and then fix them~\cite{DONEpaper,CDONEpaper,ueno2016combo,daxberger2019mixed}, or
    \item choose them according to the variable domains $\Xc, \Xd$ and then fix them~\cite{bliek2019black}.
\end{itemize}

The first option is not recommended as nonlinear optimisation would have to be used, while linear regression techniques can be used for the parameters $c_\k$.
The second option has the downside that more and more basis functions need to be added as data samples are gathered, making the surrogate model grow in size while the algorithm is running.
This is what happens in most Bayesian optimisation algorithms, which causes them to slow down over time.
The third option fixes this problem, but even though there are good approximation theorems available for a random choice of the parameters~\cite{rahimi2008uniform,DONEpaper}, it does not give any guarantees on satisfying the integer constraints.
The fourth option does, but only for problems that have no continuous variables.
Therefore, we propose to use a mix of the third and fourth option, getting the best of both options, as explained below.


We first state the required definitions, followed by our main theoretical contribution.

\begin{definition}[Integer $z$-function]
An \emph{integer $z$-function} $z_\k$ is chosen according to~\eqref{eq:zf} with $\wc = \mathbf 0$ and with $\wdd$ and $b$ having integer values chosen according to Algorithm $2$ from~\cite{bliek2019black}.
That means it has one of the following forms: $z_\k(\xc,\xd) = z_\k(\xd)=\pm (x_i - \alpha)$, with $x_i$ an element from $\xd$ and $\alpha\in \Z$ chosen between $l_i$ and $u_i$ (the lower and upper bounds of $x_i$), or $z_\k(\xc,\xd) =z_\k(\xd) = \pm (x_i - x_{i-1} - \alpha )$,
for $i>1$
and $\alpha \in \Z$ chosen between $l_i - u_{i-1}$ and $u_i - l_{i-1}$.
This results in a basis function that depends only on one or two subsequent integer variables and does not depend on any continuous variables.
\end{definition}

By making use of the integer $z$-functions, we have a surrogate model with basis functions that depend on the integer variables.
If we would add basis functions that depend only on the continuous variables, the possible interaction between continuous and integer variables would not be modelled.
But if we add randomly chosen mixed basis functions as in~\cite{daxberger2019mixed}, then we might lose the guarantee that integer constraints are satisfied in local minima.
See Figure~\ref{fig:exz} (left).
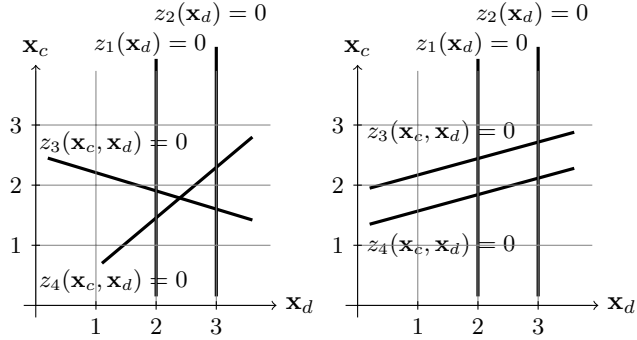
\begin{figure}[tb]
\centering
\begin{tikzpicture}[scale=0.8]
  \draw[very thick] (2.0,0.15) -- (2.0,4.1);
  \draw (1.9,4) node[above] {\small $z_1(\xd)=0$};
  \draw[very thick] (3.0,0.15) -- (3.0,4.3);
  \draw (2.9,4.5) node[above] {\small $z_2(\xd)=0$};
  \draw[very thick] (0.2,2.45) -- (3.6,1.42);
  \draw (-0.1,2.7) node[right] {\small $z_3(\xc,\xd)=0$};
  \draw[very thick] (1.1,0.7) -- (3.6,2.8);
  \draw (-0.1,0.4) node[right] {\small $z_4(\xc,\xd)=0$};

  \draw[style=help lines] (0,0) grid (3.9,3.9);

  \draw[->] (-0.2,0) -- (4,0) node[right] {$\xd$};
  \draw[->] (0,-0.2) -- (0,4) node[above] {$\xc$};

  \foreach \x/\xtext in {1/1, 2/2, 3/3}
    \draw[shift={(\x,0)}] (0pt,2pt) -- (0pt,-2pt) node[below] {\small $\xtext$};

  \foreach \y/\ytext in {1/1, 2/2, 3/3}
    \draw[shift={(0,\y)}] (2pt,0pt) -- (-2pt,0pt) node[left] {\small  $\ytext$};

\end{tikzpicture}%
\begin{tikzpicture}[scale=0.8]
  \draw[very thick] (2.0,0.15) -- (2.0,4.1);
  \draw (1.9,4) node[above] {\small $z_1(\xd)=0$};
  \draw[very thick] (3.0,0.15) -- (3.0,4.3);
  \draw (2.9,4.5) node[above] {\small $z_2(\xd)=0$};
  \draw[very thick] (0.2,1.95) -- (3.6,2.88);
  \draw (0.0,2.88) node[right] {\small $z_3(\xc,\xd)=0$};
  \draw[very thick] (0.2,1.35) -- (3.6,2.28);
  \draw (0.0,1.0) node[right] {\small $z_4(\xc,\xd)=0$};

  \draw[style=help lines] (0,0) grid (3.9,3.9);

  \draw[->] (-0.2,0) -- (4,0) node[right] {$\xd$};
  \draw[->] (0,-0.2) -- (0,4) node[above] {$\xc$};

  \foreach \x/\xtext in {1/1, 2/2, 3/3}
    \draw[shift={(\x,0)}] (0pt,2pt) -- (0pt,-2pt) node[below] {\small $\xtext$};

  \foreach \y/\ytext in {1/1, 2/2, 3/3}
    \draw[shift={(0,\y)}] (2pt,0pt) -- (-2pt,0pt) node[left] {\small $\ytext$};

\end{tikzpicture}
\caption{\textbf{(left)} Example of the problem with mixed basis functions for $1$ integer ($\xd$) and $1$ continuous variable ($\xc$).
All local minima are located in points where two 
lines intersect.
This works fine for the intersections with the integer $z$-functions $z_1$, $z_2$, but not for the two randomly chosen $z$-functions $z_3$, $z_4$, as in that point $\xd$ takes on a non-integer value.
\textbf{(right)} A solution to the problem is to use mixed $z$-functions that are parallel to a number of linearly independent vectors equal to $\dc$.
This ensures that all intersections are located in points where $\xd$ is integer.}
\label{fig:exz}
\end{figure}

To avoid both 
problems, we propose to add mixed basis functions as in~\cite{daxberger2019mixed}, but we choose them pseudo-randomly rather than randomly.
This benefits from the success that randomly chosen weights have had in the past~\cite{DONEpaper,CDONEpaper,ueno2016combo,daxberger2019mixed},
while avoiding the problem from Figure~\ref{fig:exz} (left).

\begin{definition}[Mixed $z$-function]\label{def:mixed}
A \emph{mixed $z$-function} $z_\k$ is chosen according to~\eqref{eq:zf} with $\omega_\k = \left[\begin{array}{l}\wc_\k \\ \wdd_\k \end{array}\right]$ sampled from a set $\Omega$ that contains $\dc$ random vectors in $\R^{\dc+\dd}$ with a continuous probability distribution $p_{\omega}$, and $b_\k$ is then chosen from a random continuous probability distribution $p_b$ which depends on $\omega_k$. This results in a basis function that depends on all continuous and on all integer variables.
\end{definition}

The probability distributions $p_\omega$ and $p_b$  are chosen in such a way that the mixed $z$-functions are never completely outside the domain $\Xc\times\Xd$.
(The exact procedure for choosing them can be found in the appendix.)
As a result of the definition, all mixed $z$-functions will be parallel to one of the $\dc$ random vectors. 
See Figure~\ref{fig:exz} (right).
This gives the following result, which guarantees the unique property of this continuous surrogate model, i.e. that all local minima are integer-valued in the intended variables:

\begin{theorem}\label{thm:main}
If the surrogate model $\surr$ consists entirely of integer and mixed $z$-functions, then any strict local minimum $(\xc^*,\xd^*)$ of $\surr$ satisfies $\xd \in \Z^{\dd}$.

\end{theorem}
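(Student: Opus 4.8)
The plan is to combine the piecewise-linear structure of $\surr$ with the fact that the mixed $z$-functions can only point in $\dc$ distinct directions. Fix a strict local minimum, write $p=(\xc^*,\xd^*)$, and let $A=\{\k:z_\k(p)=0\}$ be the set of \emph{active} basis functions. On a small enough ball around $p$ each inactive term $c_\k\max\{0,z_\k\}$ is affine (it is constantly $0$ or constantly $c_\k z_\k$ there), so near $p$ we may write $\surr(x)=\langle\ell,x\rangle+\text{const}+\sum_{\k\in A}c_\k\max\{0,z_\k(x)\}$. Each active $z_\k$ is affine and vanishes at $p$, so for every direction $v$ and every sufficiently small $t>0$ one has exactly $\surr(p+tv)-\surr(p)=t\big(\langle\ell,v\rangle+\sum_{\k\in A}c_\k\max\{0,\langle\nabla z_\k,v\rangle\}\big)=:t\,h(v)$, with no higher-order terms. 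Hence $p$ is a strict local minimum if and only if $h(v)>0$ for all $v\neq 0$.

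The first substantive step is to show that the active gradients $\{\nabla z_\k:\k\in A\}$ span $\R^{\dc+\dd}$. If not, choose $v\neq 0$ orthogonal to all of them; then $h(v)=\langle\ell,v\rangle$ and $h(-v)=-\langle\ell,v\rangle$, and these cannot both be positive, contradicting strictness. So we may pick $m:=\dc+\dd$ active basis functions with linearly independent gradients.

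Next I would split these $m$ functions by type. By Definition~\ref{def:mixed} the gradient of every mixed $z$-function lies in the set $\Omega$ of only $\dc$ vectors, so the mixed ones among the chosen $m$ are linearly independent vectors inside a subspace of dimension at most $\dc$; hence at most $\dc$ of the chosen functions are mixed, and therefore at least $\dd$ of them are integer $z$-functions. Integer $z$-functions do not depend on $\xc$, so their gradients lie in the coordinate subspace $\{0\}^{\dc}\times\R^{\dd}$, which is $\dd$-dimensional; being linearly independent, exactly $\dd$ of them occur and their gradients form a basis of that subspace. Since all active functions vanish at $p$, these $\dd$ integer $z$-functions give a square linear system $M\xd^*=\alpha$ where $\alpha\in\Z^{\dd}$ and $M$ is an invertible integer matrix whose rows have the form $\pm e_i$ or $\pm(e_i-e_{i-1})$.

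It remains to conclude $\xd^*\in\Z^{\dd}$ from this system, which is exactly the situation already analysed for the purely discrete surrogate in~\cite{bliek2019black}: $M$ is, up to signs, a reduced node--arc incidence matrix of a graph on the nodes $\{0,1,\dots,\dd\}$, its rows being independent forces the corresponding edge set to be a spanning tree, and such a matrix is unimodular, so $M^{-1}$ is integer and $\xd^*=M^{-1}\alpha\in\Z^{\dd}$. I expect the main obstacle to be the first substantive step: one has to argue carefully that $\surr$ is \emph{exactly} affine along short segments emanating from $p$ so that the strict-minimum property is equivalent to positivity of $h$ in every direction, and to use strictness in an essential way (a non-strict minimum can lie on a flat face where no integer constraint is forced). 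The type counting and the unimodularity conclusion are then routine, the latter being inherited from the analysis of the IDONE surrogate model.
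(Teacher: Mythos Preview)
Your proposal is correct and follows essentially the same route as the paper: the paper packages your first ``spanning'' step as Lemma~\ref{thm:localminima} and your unimodularity conclusion as Lemma~\ref{lem:integers} (both inherited from the IDONE analysis in~\cite{bliek2019black}), and then performs the identical counting argument that at most $\dc$ of the $\dc+\dd$ active directions can be mixed. The only difference is that you unpack these two lemmas inline with explicit justifications (the directional positive-homogeneity argument for the first, and the incidence-matrix unimodularity for the second), whereas the paper simply cites them.
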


This result
makes it possible to apply
continuous optimisation
to find a minimum of our surrogate model, instead of having to solve a mixed-integer program which is more expensive, or having to resort to rounding which is sub-optimal.
As the rectified linear units are linear almost everywhere, the surrogate model can be optimised relatively easily with a gradient-based technique such as L-BFGS~\cite{nocedal} or other standard methods.

Before presenting the proof, we 
state two results 
that are relevant to our approach:

\begin{lemma}\label{thm:localminima}
Any strict local minimum of $\surr$ is located in a point $(\xc^*,\xd^*)$ with\linebreak $z_\k(\xc^*,\xd^*) = 0$ for  $(\dc+\dd)$ linearly independent functions $z_\k$~\cite{bliek2019black}.
\end{lemma}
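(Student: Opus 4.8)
The plan is to exploit the piecewise-affine structure of $\surr$ and argue by contraposition on the number of \emph{active}, linearly independent constraints at the candidate point. Write $n:=\dc+\dd$ and, for a candidate point $\x^{*}=(\xc^{*},\xd^{*})\in\R^{n}$, let $\omega_\k\in\R^{n}$ denote the gradient of the affine map $z_\k$, i.e.\ $\omega_\k=[\wc_\k^{T}\ \wdd_\k^{T}]^{T}$. Since each $\phi_\k=\max\{0,z_\k\}$ is continuous and piecewise affine and $\surr$ is a finite linear combination of the $\phi_\k$, the model $\surr$ is itself continuous and piecewise affine. I would introduce the active set $A:=\{\k : z_\k(\x^{*})=0\}$ and record one routine observation first: a strict local minimum cannot lie in the interior of a linear region, because there $\surr$ agrees with an affine function, and an affine function has a strict local minimum at no interior point (along some line it is either constant or strictly decreasing). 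Hence $A$ is nonempty at any strict local minimum, and the real content is to show that the active gradients $\{\omega_\k:\k\in A\}$ span all of $\R^{n}$.

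The core step is to prove the contrapositive: if the active gradients fail to span $\R^{n}$, then $\x^{*}$ is not a strict local minimum. Suppose $\mathrm{span}\{\omega_\k:\k\in A\}$ has dimension $r<n$. Its orthogonal complement then has dimension $n-r\ge 1$, so I may choose a nonzero $v$ with $\omega_\k^{T}v=0$ for every $\k\in A$. I would then restrict $\surr$ to the line $t\mapsto \x^{*}+tv$ and show this restriction is \emph{exactly} affine in $t$ on a two-sided neighbourhood of $t=0$. For active indices, $z_\k(\x^{*}+tv)=z_\k(\x^{*})+t\,\omega_\k^{T}v=0$, so these ReLUs stay pinned at their kink and contribute the constant $0$; for each inactive index $z_\k(\x^{*})\neq 0$, continuity keeps the sign of $z_\k$ fixed for all small $|t|$, so $\phi_\k$ along the line equals either $z_\k$ or $0$ and is affine. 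Summing over $\k$ gives $\surr(\x^{*}+tv)=\surr(\x^{*})+t\,a^{T}v$ for small $|t|$, where $a:=\sum_{\k:\,z_\k(\x^{*})>0}c_\k\,\omega_\k$.

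Since this restriction is affine across $t=0$ rather than merely one-sided, it cannot have a strict local minimum there: if $a^{T}v\neq 0$ the value strictly decreases in one of the two directions $\pm v$, and if $a^{T}v=0$ the value is constant along the segment so nearby points tie with $\x^{*}$. In either case $\x^{*}$ is not a strict local minimum, which establishes the contrapositive. Consequently, at any strict local minimum the active gradients span $\R^{n}$, so one can select $n=\dc+\dd$ indices $\k$ with $z_\k(\x^{*})=0$ and linearly independent gradients; linear independence of the gradients $\omega_\k$ immediately gives linear independence of the corresponding affine functions $z_\k$, as claimed.

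The step I expect to be most delicate is the assertion that the restriction to $\x^{*}+tv$ is affine on a genuinely \emph{two-sided} neighbourhood of $t=0$. This is precisely where the orthogonality $\omega_\k^{T}v=0$ does the work, keeping the active ReLUs at zero for both signs of $t$ instead of only to first order; without it one would merely control a one-sided directional derivative and could not rule out a kink that turns upward in both directions. The remaining ingredient is a uniformity argument: because there are only finitely many basis functions, a single radius can be chosen that simultaneously preserves the sign of every inactive $z_\k$ along the line. Once two-sided affineness is secured, the contradiction is immediate, so this is the crux of the argument.
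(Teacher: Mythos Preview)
Your argument is correct and is precisely a rigorous elaboration of the paper's one-line justification (``$\surr$ is piece-wise linear, so any strict local minimum must be located in a point where the model is nonlinear in all directions''); the paper itself does not prove the lemma but defers to \cite{bliek2019black}. Your identification of the crux---that orthogonality $\omega_\k^{T}v=0$ pins the active ReLUs at zero on a genuinely two-sided segment, making the restriction affine rather than merely one-sided differentiable---is exactly the mechanism behind that intuition.
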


This follows from the fact that $\surr$ is piece-wise linear, so any strict local minimum must be located in a point where the model is nonlinear in all directions.

\begin{lemma}\label{lem:integers}
If $z_\k(\xd) = 0$ for $\dd$ different linearly independent integer $z$-functions $z_\k$,
then $\xd \in \Z^{\dd}$.
\end{lemma}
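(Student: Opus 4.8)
\emph{Proof plan.} The plan is to show that imposing $z_\k(\xd)=0$ on $\dd$ linearly independent integer $z$-functions pins $\xd$ down to a single point, and that this point necessarily has integer coordinates. Using the two admissible forms from the definition of an integer $z$-function, each equation $z_\k(\xd)=0$ is either $x_i=\alpha$ or $x_i-x_{i-1}=\alpha$ with $\alpha\in\Z$. Collecting the $\dd$ equations yields a linear system $A\xd=\beta$ with $\beta\in\Z^{\dd}$, where each row of $A$ is $\pm e_i$ or $\pm(e_i-e_{i-1})$. Since the $z_\k$ are linearly independent, $A$ is an invertible $\dd\times\dd$ matrix, so the system has a unique solution; it remains to argue that this solution lies in $\Z^{\dd}$.

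To organise the argument I would introduce an auxiliary ``ground'' coordinate $x_0:=0$, so that an equation of the first type reads $x_i-x_0=\alpha$, and associate to the system a graph $H$ on vertex set $\{0,1,\dots,\dd\}$ whose edges are $\{i-1,i\}$ (for the second type, where $i>1$) and $\{0,i\}$ (for the first type). Then $H$ has $\dd+1$ vertices and $\dd$ edges, and each row of $A$ equals, up to sign, the reduced incidence vector of the corresponding edge (the incidence vector with the column for vertex $0$ removed). The key structural claim is that $H$ is a spanning tree: if $H$ contained a cycle, orienting it consistently would make the (oriented) incidence vectors of its edges sum to zero -- the deletion of the ground column not affecting this -- contradicting the linear independence of the $z_\k$; hence $H$ is acyclic, and an acyclic graph with $\dd+1$ vertices and $\dd$ edges is connected, i.e. a spanning tree.

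With $H$ a tree I would finish by propagation from the root. Root $H$ at vertex $0$ and process the vertices in order of nondecreasing distance from $0$. Each non-root vertex $i$ has a unique parent $\pi(i)$, and the edge $\{i,\pi(i)\}$ gives an equation $x_i-x_{\pi(i)}=\gamma$ with $\gamma\in\Z$ (which reads $x_i=\gamma$ when $\pi(i)=0$). By induction on the distance to the root, with base case $x_0=0$, we get $x_{\pi(i)}\in\Z$ and hence $x_i\in\Z$. Therefore every coordinate of $\xd$ is an integer, i.e. $\xd\in\Z^{\dd}$.

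I expect the only nontrivial point to be the claim that linear independence of the $\dd$ integer $z$-functions forces $H$ to be acyclic/connected; this is a standard property of oriented graph incidence matrices. An alternative that sidesteps the graph language is to observe that the matrix $A$ is (a submatrix of) an oriented incidence matrix and hence totally unimodular, so $\det A=\pm1$; Cramer's rule applied to $A\xd=\beta$ with integer $\beta$ then yields $\xd\in\Z^{\dd}$ directly. Everything else is a routine unwinding of the definition of an integer $z$-function.
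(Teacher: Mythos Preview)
Your argument is correct. The paper does not actually prove this lemma in place: it simply points to Theorem~2(II) of~\cite{bliek2019black} and says the reasoning carries over verbatim, so there is no in-paper proof to compare against. What you have written is a clean self-contained argument that the paper omits. Both routes you sketch work; the total-unimodularity shortcut is the crispest (each row of $A$ has at most two nonzeros, of opposite sign when there are two, so $A$ is a network matrix, $\det A=\pm1$, and Cramer gives $\xd=A^{-1}\beta\in\Z^{\dd}$), while the spanning-tree version makes the underlying combinatorics explicit and matches the ``chain'' structure of the integer $z$-functions.

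One point worth stating explicitly in your write-up: you pass from ``$\dd$ linearly independent integer $z$-functions'' to ``$A$ invertible'', i.e.\ to linear independence of the \emph{gradients} $\wdd_k$. That is the intended reading in the paper (it is what lets Lemma~\ref{thm:localminima} pin down a single point), and it is in any case automatic here: two affine functions with proportional linear parts that share a common zero $\xd$ are scalar multiples of each other, hence not linearly independent. So no distinct pair among your $\dd$ functions can give the same (or opposite) row of $A$, and your graph $H$ is indeed simple with $\dd$ edges on $\dd+1$ vertices, as your tree argument requires.
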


\begin{proof}
The proof follows exactly the same reasoning as the proof of~\cite[Thm. 2 (II)]{bliek2019black}. 
\end{proof}


We now show the proof of Theorem~\ref{thm:main} below.
\begin{proof}[Proof of Theorem~\ref{thm:main}]
From Lemma~\ref{thm:localminima}
it follows that $z_\k(\xc^*,\xd^*) = 0$ for $\dc+\dd$ linearly independent $z_\k$. Since all mixed $z$-functions are parallel to one of the $\dc$ randomly chosen vectors, there can only be $\dc$ linearly independent mixed $z$-functions. As all other $z$-functions are integer $z$-functions, this means that there are $\dd$ linearly independent integer $z$-functions. The result now follows from Lemma
~\ref{lem:integers}%
. 
\end{proof}

\subsection{MVRSM details}

In the proposed algorithm, we first initialise the model by adding basis functions consisting of integer and mixed $z$-functions.
The procedure of generating integer $z$-functions is the same as in the advanced model of~\cite{bliek2019black}, which gives $D_d = 1+4|\Xd|-|{\Xd}[1]|-|{\Xd}[\dd]|$ basis functions in total, with $\Xd[i]$ the domain of the $i$-th integer variable.
We then generate $D_c$ mixed $z$-functions.
Since our approach allows us to choose any number of mixed $z$-functions without losing the guarantee of satisfying the integer constraints, computational resources are the only limiting factor here.
We choose $D_c=\lceil\dc\cdot D_d/\dd\rceil$ to have the same number of mixed $z$-functions per continuous variable as the number of integer $z$-functions per integer variable. 

The algorithm proceeds with an iterative procedure consisting of four steps%
: \textbf{1)} evaluating the objective, \textbf{2)} updating the model, \textbf{3)} finding the minimum of the model, and \textbf{4)} performing an exploration step.
Evaluating the objective $\obj$
gives a data sample
$(\xc, \xd, \y)$.
The update procedure of the surrogate model is performed with the recursive least squares algorithm~\cite{sayed1998recursive}, which can be done since the model is linear in its parameters $c_\k$.
We also add a regularisation factor of $10^{-8}$ here for numerical stability. Furthermore, the weights $c_\k$ from~\eqref{eq:surrdef} are initialised as $c_\k=1$ for the basis functions corresponding to integer $z$-functions, and as $c_\k=0$ for the basis functions corresponding to the mixed $z$-functions.
The minimum of the model is found with the L-BFGS method~\cite{nocedal}, which is improved by giving an analytical representation of the Jacobian.
For this purpose, we define $[\frac{d}{dx} \max\{0,x\}](0) = 0.5$, as the rectified linear units are non-differentiable in $0$.
We run the L-BFGS method for $20$ sub-iterations only, as the goal is not to find the exact minimum of the surrogate model, but rather to find a promising area of the search space.
Lastly, we perform an exploration step on the point $(\xc^*, \xd^*)$ found by the L-BFGS algorithm, where the point is given a small perturbation so that local optima can be avoided.
The whole algorithm is shown in Algorithm~\ref{alg:MVRSM}.

\begin{algorithm}[htbp]
\caption{MVRSM algorithm}\label{alg:MVRSM}
 \begin{algorithmic}
 \Require Objective $\obj$, domains $\Xc$, $\Xd$, budget $\budget$
 \Ensure $\xc^{(\budget)}, \xd^{(\budget)}$, $\y^{(\budget)}$
 \State Initialise surrogate $\surr$ with integer and mixed $z$-functions
 \State Initialise $c_\k=1$ for integer $z$-functions and $c_\k=0$ for mixed $z$-functions, initialise other recursive least squares parameters
 \For{$\it=1, \ldots, \budget$}
    \State Evaluate $\y^{(\it)} = \obj\left(\xc^{(\it)},\xd^{(\it)}\right)+\noise$
    \State Update the parameters of $\surr$ with data point $\left(\xc^{(\it)},\xd^{(\it)},  \y^{(\it)}\right)$ using recursive least squares
    \State Solve $\min \surr(\xc,\xd)$ over domains $\Xc$, $\Xd$ with relaxed integer constraints using \mbox{L-BFGS}
    \State Explore around the found solution $(\xc^*,\xd^*)$ by adding random perturbation \mbox{$\left(\delta_c,\delta_d\right)\in\R^{\dc}\times\Z^{\dd}$}: $\left(\xc^{(\it+1)},\xd^{(\it+1)}\right) = (\xc^*,\xd^*)+(\delta_c,\delta_d)$
 \EndFor
\end{algorithmic}
\end{algorithm}

\section{Experiments}\label{sec:exp}

To see if the proposed algorithm overcomes the drawbacks of existing surrogate modelling algorithms for problems with mixed variables in practice, 
we compare MVRSM with different state-of-the-art methods and random search on two real-life benchmarks and on several synthetic benchmark functions used in related work.
For the real-life benchmarks we consider one from machine learning and one from engineering, namely XGBoost hyperparameter tuning and Electrostatic Precipitator (ESP) optimisation.
For the synthetic benchmarks we consider mixed-variable problems of up to $238$ variables
from related literature.

For comparison with other methods, we consider state-of-the-art surrogate modelling algorithms that are able to deal with a mixed-variable setting, have code available, and are concerned with single-objective problems.
We compare our method with HyperOpt~\cite{bergstra2013making} (HO) and SMAC~\cite{hutter2011sequential} as two popular and established surrogate modelling algorithms that can deal with mixed variables, and we compare with CoCaBO~\cite{ru2019bayesian} as a more recent method that can deal with a mix of continuous and categorical variables.
As is good practice in surrogate modelling, we include random search (RS) in the comparisons to confirm whether more sophisticated methods are even necessary.
For the same reason, we include a standard Bayesian optimisation (BO) algorithm,
where we use rounding on the integer variables when calling the objective function.

Though we consider MiVaBO~\cite{daxberger2019mixed} also to be part of the state of the art, at the time of writing the authors have not made their code available yet.
We still include their benchmarks in the comparison. 
We make no comparison with multi-fidelity methods such as Hyperband~\cite{li2017hyperband} or BOHB~\cite{falkner2018bohb}, as these methods can only be applied to our hyperparameter tuning benchmark and not to the other benchmarks.
We also did not compare with the multi-objective methods from the related work section, as we did not find a way to make a fair comparison for single-objective problems, even though they were specifically developed for the mixed-variable setting.
Because MiVaBO uses a more expensive optimisation method, we expect MVRSM to outperform not only multi-objective methods but also MiVaBO on single-objective domains in terms of efficiency, but further research is required to confirm this.

\subsection{Implementation details}


To enable the use of categorical variables in MVRSM, we convert those variables to integers.
To enable the use of integer or binary variables for CoCaBO, we convert those variables to categorical variables.
For CoCaBO, we chose a mixture weight~\cite[Eq. (2)]{ru2019bayesian} of $0.5$ as this seemed to give the best results on synthetic benchmarks. 
SMAC is put in deterministic mode instead of the default, as this improved the results in all of our experiments: the default often repeats function evaluations at the same location, leading to an inefficient method.
The random search uses HyperOpt's implementation.
The code for HyperOpt%
\footnote{
\url{https://github.com/hyperopt/hyperopt}
}, %
SMAC%
\footnote{\url{https://github.com/automl/SMAC3}}, %
CoCaBO%
\footnote{
\url{https://github.com/rubinxin/CoCaBO_code}
}, and 
MVRSM%
\footnote{
\url{https://github.com/lbliek/MVRSM}
}
is availabe online.
For Bayesian Optimisation we use an existing  implementation\footnote{\url{https://github.com/fmfn/BayesianOptimization}} which uses Gaussian processes with the Upper Confidence Bound acquisition function.
Experiments were done in Python on 
an {Intel(R) Xeon(R) Gold 6148 CPU @ 2.40GHz} with 32 GB of RAM, and each experiment was performed using only a single CPU core.
In line with~\cite{ru2019bayesian}, all methods start with $24$ initial random guesses, which are not shown in the figures.
We used each algorithm's own implementation for this, but made sure to set it to the same uniform probability distribution over the whole search space.

All methods are compared using the same number of iterations, and the best function value found at each iteration is reported, averaged over multiple runs.
The standard deviations are indicated with shaded areas in the relevant figures.
The computation time of the methods is also reported for every iteration.



\subsection{Results on XGBoost hyperparameter tuning}

First, we consider a problem similar to that of hyperopt-sklearn~\cite{komer2014hyperopt}, where hyperparameters for a preprocessing method as well as for a classifier need to be selected and tuned simultaneously.
The choice of classifier is limited to the XGBoost method only~\cite{chen2016xgboost}, which has several hyperparameters of different shapes (continuous, integer, binary, categorical, and conditional).\footnote{The hyperparameters for XGBoost can be found at \url{https://xgboost.readthedocs.io/en/latest/parameter.html\#learning-task-parameters}}%

Conditional variables only exist when other variables take on certain values.
SMAC and HO can both deal with these efficiently, but for the other methods we use a na\"ive encoding where these variables still exist but do not influence the objective function if other choices are made.
Together with the hyperparameters for preprocessing, there are $7$ integer, $11$ continuous, and over $116$ categorical/binary/conditional variables.
The preprocessing method and XGBoost are applied to the steel-plates-faults dataset\footnote{\url{https://archive.ics.uci.edu/ml/datasets/Steel+Plates+Faults}}, and the objective is the result of a $5$-fold cross-validation, multiplied by $-1$ to make it a minimisation problem.
To find not just accurate but also efficient hyperparameters, we set a time limit of $8$ seconds, chosen roughly equal to twice the time it takes when using default hyperparameters.
If the objective took longer than that to evaluate, an objective value of $0$ was returned.
On average, the evaluation of the objective took just over $3$ seconds on our hardware.

\begin{figure}[tb]
\centering
\centering
\includegraphics[width=0.99\columnwidth]{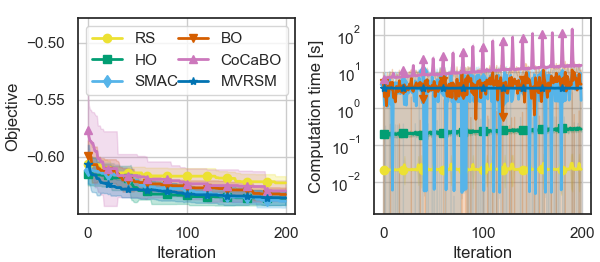}
\caption{Results on the XGBoost hyperparameter tuning benchmark ($7$ integer, $11$ continuous, ${>}116$ categorical/binary/conditional),
averaged over $7$ runs.} \label{fig:HPO}
\end{figure}

Figure~\ref{fig:HPO} shows the results on this benchmark for $200$ iterations, averaged over $10$ runs.
MVRSM gets a similar performance as its competitors on this problem, ending up with an average objective of $-0.637$.
A pair-wise Student's T-test on the final iteration shows no significant difference between MVRSM and the other surrogate-based methods ($p>0.05$), though it outperforms random search ($p\approx 0.003$).

It is important to note that besides
random search, MVRSM is the only method that has a fixed computation time per iteration.
All other methods (except SMAC, as shown later in this paper) become slower over time. 
This is especially important for problems where the evaluation time of the objective takes a similar time as the surrogate-based algorithm, e.g. $10$ seconds or less for CoCaBO, which is the case for this hyperparameter tuning problem.
In this case it is not possible anymore to disregard the computation time of the algorithm, even though this is often done in literature.
Furthermore, CoCaBO tunes its own hyperparameters every $10$ iterations, which costs even more computational resources.
In contrast, MVRSM has quite a low number of hyperparameters, and we choose them the same way in all reported experiments.
This makes it much easier to apply than other methods, or in the case of CoCaBO, much more efficient.
The practical use of this fact should not be underestimated, as especially on hyperparameter tuning problems one wants to avoid having to tune the hyperparameters of the surrogate-based algorithm.

\subsection{Results on Electrostatic Precipitator optimisation}

The ESP problem~\cite{rehbach2018ESPBenchmark} is a real-life industrial problem where components of a gas cleaning system need to be designed.
The goal is to reduce environmental pollution.
The system contains $49$ different slots that can each hold one of $8$ different types of metal plates that each influence the gas flow in a different way.
After choosing the configuration of the plates, an expensive computational fluid dynamics simulator calculates the corresponding objective, taking around $27$ seconds on average on our hardware.
This problem has $8$ categories for each variable, though $5$ of the categories correspond to ordinal variables, namely the size of holes in the metal plates.

We have adapted the ESP problem such that the $5$ hole sizes are not restricted to fixed values, but are free to take on different continuous values.
This adds $5$ continuous variables to the problem with otherwise only categorical variables, using the same five options for each slot, as having each slot take on a different value would substantially increase the manufacturing costs.

\begin{figure}[tb]
\centering
\includegraphics[width=0.99\columnwidth]{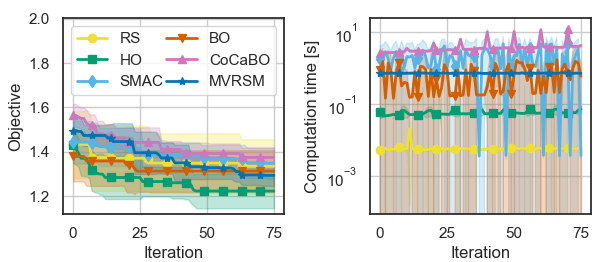}
\caption{Results on the ESP benchmark ($49$ categorical, $5$ continuous), averaged over $5$ runs.} \label{fig:ESP3}
\end{figure}

Figure~\ref{fig:ESP3} shows the results on this benchmark for $76$ iterations, as the problem typically has a budget of $100$ function evaluations~\cite{GECCO2020industrial} and we used $24$ of them for random initial guesses.
MVRSM ends up with an average objective of $1.29$.
A pair-wise Student's T-test on the final iteration shows no significant difference between MVRSM and the other methods ($p\approx 0.13$ when compared with HO), except when comparing with CoCaBO ($p\approx 0.024$) which performs more poorly on this problem.
This indicates that MVRSM is a competitive method in realistic expensive optimisation problems.
However, the effect of slowdown in the other algorithms is not clearly visible due to the low number of iterations used.
The real life benchmarks are too expensive to evaluate for a large number of iterations, which is why we now turn to investigate synthetic benchmarks.
Besides a larger number of function evaluations, the use of synthetic benchmarks also allow us to investigate the performance of MVRSM on large-scale problems.

\subsection{Results on relevant synthetic benchmarks}

To investigate the effect of algorithms slowing down, as well as the scalability of MVRSM and how it compares to other algorithms on their own benchmarks,
we make a comparison on several large-scale synthetic functions from related literature.
The Ackley and Rosenbrock functions are two well-known benchmarks in the black-box optimisation community\footnote{Details available at \url{https://www.sfu.ca/~ssurjano/optimization.html}
}.
Both can be scaled to any dimension.
For the Ackley function we choose a dimension of $53$, but $50$ of the variables were adapted to binary variables in $\Xd=\{0,1\}^{50}$.
The $3$ continuous variables were limited to $\Xc=[-1,1]^3$.
This causes the problem to be of a similar scale as the problem of variational auto-encoder hyperparameter tuning after binarising the discrete hyperparameters~\cite[App. E.1]{daxberger2019mixed}.
For the Rosenbrock function we choose a dimension of $239$, 
with the first $119$ variables adapted to integers in $\Xd = \{-2, -1, 0, 1, 2\}^{119}$, and $119$ continuous variables limited to $\Xc = [-2,2]^{119}$. 
The function was scaled with a factor $1/50000$.
This problem is of the same scale as the problem of feed-forward classification model hyperparameter tuning~\cite{bergstra2013making}, except that the ratio between continuous and integer variables is chosen to be $1:1$.
Uniform noise in~$[0,10^{-6}]$ was added to each function evaluation in both functions.
Finally, we investigated a randomly generated synthetic test function from~\cite[Appendix C.1, Gaussian weights variant]{daxberger2019mixed}.
We scaled this problem up to have $119$ integer and $119$ continuous variables.
No bounds were reported for this problem so we set them to $\Xd=\{0,1,2,3\}^{119}$ for the integer variables and $\Xc=[0,3]^{119}$ for the continuous variables.


\begin{figure}[tb]
\centering
\includegraphics[width=0.99\columnwidth]{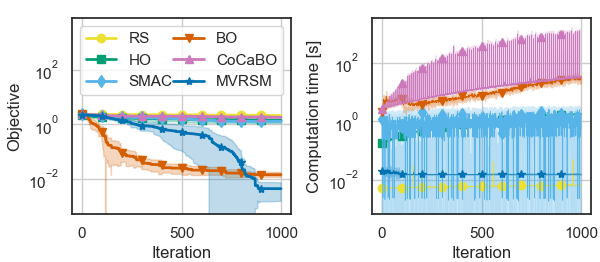}
\caption{Results on the Ackley53 benchmark
($50$ binary, $3$ continuous), 
averaged over $7$ runs. Note that the left figure has a logarithmic scale. This problem is of a similar scale as variational auto-encoder hyperparameter tuning~\cite[Sec. 4.2]{daxberger2019mixed}.} \label{fig:A53}
\end{figure}

\begin{figure}[tb]
\centering
\includegraphics[width=0.99\columnwidth]{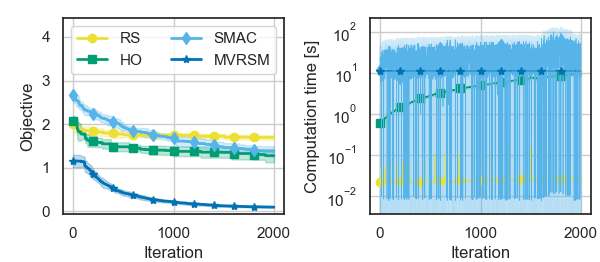}
\caption{Results on the Rosenbrock238 benchmark
($119$ integer, $119$ continuous),
averaged over $7$ runs. 
BO and CoCaBO were not evaluated for this benchmark due to the large computation time. 
This problem is of a similar scale as feed-forward classification model hyperparameter tuning~\cite{bergstra2013making}.
} \label{fig:R238}
\end{figure}

\begin{figure}[tb]
\centering
\includegraphics[width=0.99\columnwidth]{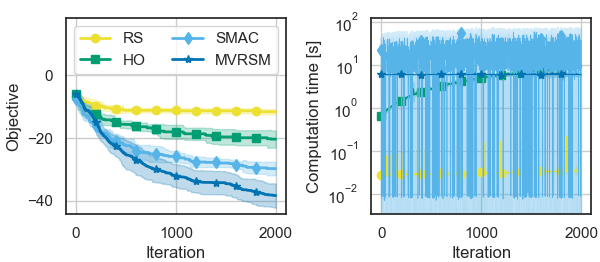}
\caption{Results on one randomly generated MiVaBO synthetic benchmark~\cite[Appendix C.1, Gaussian weights variant]{daxberger2019mixed} with a larger scale
($119$ integer, $119$ continuous), 
averaged over $7$ runs.
BO and CoCaBO were not evaluated for this benchmark due to the large computation time.
This problem is of a similar scale as feed-forward classification model hyperparameter tuning~\cite{bergstra2013making}.
} \label{fig:LM238}
\end{figure}

Figures~\ref{fig:A53}-\ref{fig:LM238} show the performance of the different algorithms on these three benchmarks.
MVRSM clearly outperforms the other methods in terms of accuracy,
and the computation times of BO and CoCaBO become prohibitively large.
The slowdown of the other surrogate-based algorithms is now clearly visible, with their computation time increasing every iteration, although SMAC does not suffer from this.

The fact that MVRSM outperforms both HO and SMAC is surprising, considering that the scale of the larger problems is similar to that of one of HO's own benchmarks, while the authors of HO consider SMAC a potentially superior optimiser~\cite[p. 8]{bergstra2013making}.

\section{Conclusion and Future Work}

We showed how Mixed-Variable ReLU-based Surrogate Modelling (MVRSM) solves three problems present in methods that can deal with mixed variables in expensive black-box optimisation.
First, it solves the problem of slowing down over time due to a growing surrogate model.
Second, it solves the problem of sub-optimality and inefficiency that may arise due to the need to satisfy integer constraints.
Third, it solves the problem of model inaccuracies due to limited interaction between the mixed variables.
MVRSM's surrogate model, based on a linear combination of rectified linear units, avoids all of these problems by having a fixed number of basis functions that contain interaction between all variables, while also having the guarantee that any local optimum is located in points where the integer constraints are satisfied.
These properties cause MVRSM to give competitive performance on two real-life benchmarks, which we have shown experimentally.
It also makes MVRSM more accurate than the state-of-the-art on large-scale synthetic problems (e.g. $>\!50$ variables) and more efficient 
than most competitors.
All of this is achieved using the same hyperparameter settings for MVRSM, while for other methods it might be necessary to spend some time on finding the right settings.


For future work we will investigate the exploration part of the surrogate model, for example by applying techniques with more theoretical guarantees such as Thompson sampling, and 
adapt the method to efficiently deal with categorical and conditional variables and with constraints.


\section*{Acknowledgements}

This work is part of the research programme Real-time data-driven maintenance logistics with project number 628.009.012, which is financed by the Dutch Research Council (NWO).
The authors thank 
Erik Daxberger
for providing the code for generating one of \mbox{MiVaBO's} synthetic test functions (called MiVaBO synthetic function in this paper),
Frederik Rehbach for providing information on the ESP problem, and anonymous reviewers of an earlier version of this paper for providing constructive feedback.

\bibliographystyle{abbrv}
\bibliography{mybib}

\appendix

\section{Details for generating mixed basis functions}\label{app:pb}

In this section we show how to choose $p_\omega$ and $p_b$ from Definition~\ref{def:mixed} in such a way that the mixed $z$-functions are never completely outside the domain $\Xc\times\Xd$.
We recommend to choose $p_\omega$ to be a uniform distribution over $[-\frac{1}{\dc+\dd}, \frac{1}{\dc+\dd}]^{\dc+\dd}$.
This way, the term $\wc_\k^T \xc + \wdd_\k^T \xd$ will not take on large values, which might cause numerical problems.

After sampling  $\omega_\k = \left[\begin{array}{l}\wc_\k \\ \wdd_\k \end{array}\right]$ from $p_\omega$, we look for two cornerpoints $\mathbf q_1, \mathbf q_2$ of the space $\Xc\times\Xd$.
For every dimension $i$, the $i$-th element of corner points $\mathbf q_1, \mathbf q_2$ is determined by
\begin{align}
    {q_1}_i = \left\{\begin{array}{lr}l_i, & \ {\omega_\k}_i\geq 0,\\ u_i, &\  {\omega_\k}_i< 0,\end{array}\right.\\
    {q_2}_i = \left\{\begin{array}{lr}u_i, & \ {\omega_\k}_i\geq 0,\\ l_i, & \  {\omega_\k}_i< 0.\end{array}\right.
\end{align}
Here, $l_i$ and $u_i$ are the lower and upper bounds of the $i$-th variable respectively, so this gives 
\begin{align}
    \omega_\k^T \mathbf q_1 \leq \wc_\k^T \xc + \wdd_\k^T \xd \leq \omega_\k^T \mathbf q_2 \ \forall \  \xc\in \Xc, \xd \in \Xd. \label{eq:boundvw}
\end{align}
Now we calculate the distance from the hyperplane generated by $\omega_\k$ to these corner points, which can be done with the inner product:
\begin{align}
    \beta_1 & = \omega_\k^T \mathbf q_1, \ 
    \beta_2  = \omega_\k^T \mathbf q_2. \label{eq:beta}
\end{align}
By the way $\beta_1$ and $\beta_2$ are constructed and because $l_i<u_i$, we now have $\beta_1<\beta_2$.
We choose $p_b$ equal to the uniform distribution over $[-\beta_2, -\beta_1]$.

Next we prove that this choice of $p_b$ prevents the hyperplane $z_\k(\xc,\xd)=0$ from being completely outside the set $\Xc\times\Xd$.

\begin{theorem}
    Let $\omega_\k = \left[\begin{array}{l}\wc_\k \\ \wdd_\k \end{array}\right]$ be sampled from any continuous probability distribution $p_\omega$ and let $b_\k$ be sampled from the uniform distribution over $[-\beta_2, -\beta_1]$, with $\beta_1$, $\beta_2$ as in~\eqref{eq:beta}.
    Let $z_\k(\xc,\xd)=\wc_\k^T \xc + \wdd_\k^T\xd + b_\k$.
    Then, there exists a $(\xc, \xd) \in \Xc\times \Xd$ such that $z_\k(\xc,\xd)=0$.
\end{theorem}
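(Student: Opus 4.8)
The plan is to apply the intermediate value theorem along a path joining the two corner points $\mathbf q_1$ and $\mathbf q_2$ constructed just above. The whole construction of $\mathbf q_1,\mathbf q_2$, of $\beta_1,\beta_2$, and of the distribution of $b_\k$ is tailored precisely so that $z_\k$ takes opposite signs at these two endpoints, which forces a zero in between.

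First I would evaluate $z_\k$ at the two corner points. By \eqref{eq:zf} and the definition of $\beta_1,\beta_2$ in \eqref{eq:beta}, the value of $z_\k$ at $\mathbf q_1$ is $\omega_\k^T\mathbf q_1+b_\k=\beta_1+b_\k$, and at $\mathbf q_2$ it is $\omega_\k^T\mathbf q_2+b_\k=\beta_2+b_\k$. Since $b_\k$ is drawn from the uniform distribution over $[-\beta_2,-\beta_1]$ and $\beta_1\le\beta_2$, this gives $\beta_1+b_\k\le 0\le\beta_2+b_\k$, i.e. $z_\k(\mathbf q_1)\le 0\le z_\k(\mathbf q_2)$. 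Note also that $\mathbf q_1,\mathbf q_2\in\Xc\times\Xd$: their continuous blocks are corners of the box $\prod_i[l_i,u_i]$, and their integer blocks have integer coordinates because the bounds $l_i,u_i$ of the integer variables are integers. So the two corners already bracket a root of $z_\k$; it only remains to connect them inside the domain by a curve along which $z_\k$ is continuous.

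Second I would exhibit such a curve and conclude. On the relaxed domain that the algorithm actually searches over --- the box $\prod_i[l_i,u_i]$ --- the affine segment $\gamma(t)=(1-t)\mathbf q_1+t\,\mathbf q_2$, $t\in[0,1]$, stays in the domain coordinatewise, and $z_\k(\gamma(t))=(1-t)(\beta_1+b_\k)+t(\beta_2+b_\k)$ is affine in $t$ with the endpoint signs established above, so the intermediate value theorem yields a $t^\ast$ with $z_\k(\gamma(t^\ast))=0$ and $\gamma(t^\ast)\in\Xc\times\Xd$. The step I expect to be the main obstacle is this containment of the connecting path when $\Xd$ is read literally as the integer lattice rather than its box relaxation: then $\gamma$ leaves $\Xc\times\Xd$, and one must instead walk from the integer block of $\mathbf q_1$ to that of $\mathbf q_2$ along a monotone unit-step lattice path and, on each integer slice, use continuity of $\xc\mapsto\wc_\k^T\xc$ on the connected set $\Xc$ to sweep out an interval of $z_\k$-values; one then checks, using \eqref{eq:boundvw} together with the observation that a unit lattice step changes $\wdd_\k^T\xd$ by at most $\max_i|(\wdd_\k)_i|$, that consecutive slices overlap, so that the union of these intervals still contains $0$.
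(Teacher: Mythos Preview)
Your direct intermediate-value argument along the segment $\gamma(t)=(1-t)\mathbf q_1+t\,\mathbf q_2$ is correct once $\Xd$ is read as its box relaxation $\prod_i[l_i,u_i]$, which is how the paper actually uses these mixed basis functions (the surrogate is optimised over the relaxed box via L-BFGS). The paper argues by contradiction instead: it supposes no zero lies in the domain and claims that any $(\xc,\xd)$ with $z_\k(\xc,\xd)=0$ must then satisfy $\wc_\k^T\xc+\wdd_\k^T\xd>\beta_2$ or $<\beta_1$, which contradicts $b_\k\in[-\beta_2,-\beta_1]$. Your version is both more direct and on firmer logical footing, since the paper's implication ``outside the domain $\Rightarrow$ violates~\eqref{eq:boundvw}'' is the converse of what \eqref{eq:boundvw} actually asserts; your argument avoids that soft spot entirely.

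Your instinct to worry about the literal lattice $\Xd\subset\Z^{\dd}$ is good, but the overlap patch you sketch does not work in general. Take $\dc=\dd=1$, $\Xc=[0,0.1]$, $\Xd=\{0,1\}$, $\omega_\k=(0.01,\,1)$ and $b_\k=-0.5\in[-\beta_2,-\beta_1]=[-1.001,\,0]$. On the slice $\xd=0$ the range of $z_\k$ is $[-0.5,-0.499]$, and on $\xd=1$ it is $[0.5,0.501]$; neither contains $0$ and they do not overlap, so there is no zero in $\Xc\times\Xd$. The step ``a unit lattice step changes $\wdd_\k^T\xd$ by at most $\max_i|(\wdd_\k)_i|$, hence consecutive slices overlap'' fails precisely because the continuous part may be too narrow to bridge that jump. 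Read strictly over $\Xc\times\Xd\subset\R^{\dc}\times\Z^{\dd}$ the statement is therefore false; it is meant, and only needed, for the relaxed box, where your segment IVT argument already settles it.
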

\begin{proof}

Suppose that $(\xc, \xd)\not \in \Xc\times \Xd$ for all $(\xc,\xd)$ for which $z_\k(\xc,\xd)=0$.
Then from~\eqref{eq:boundvw}, at least one of the following inequalities holds:
\begin{align}
    \wc_\k^T \xc + \wdd_\k^T \xd & > \omega_\k^T\mathbf q_2,\label{eq:qq2}\\
    \wc_\k^T \xc + \wdd_\k^T \xd & < \omega_\k^T\mathbf q_1.\label{eq:qq1}
\end{align}
Because $z_\k(\xc,\xd)=0$,
we have $b_\k = -\wc_\k^T \xc - \wdd_\k^T\xd$.
Because $b_\k$ is sampled from $p_b$, from~\eqref{eq:beta} we also have $-\omega_\k^T\mathbf q_2 \leq b_\k \leq -\omega_\k^T\mathbf q_1$.
This gives $\omega_\k^T\mathbf q_1 \leq  \wc_\k^T \xc + \wdd_\k^T \xd \leq \omega_\k^T\mathbf q_2$, which is in conflict with~\eqref{eq:qq2}-\eqref{eq:qq1}.
By contradiction, there has to exist a $(\xc,\xd)\in \Xc\times\Xd$ with $z_\k(\xc,\xd)=0$. 
\end{proof}




\section{Details on the exploration step for integer variables}\label{app:expl}

This section gives more details on the last step of the MVRSM algorithm, the exploration step.
For the integer variables $\xd^*$, the exploration step consists of determining a random perturbation $\delta_d\in \Z^{\dd}$ that is added to the solution. 
Our approach is similar to the one in~\cite[Sec. 3.4]{bliek2019black}, except that we allow perturbations that are larger than $1$.
We determine $\delta_d$ according to Algorithm~\ref{alg:deltad}.
\begin{algorithm}[tb]
\caption{Determining $\delta_d$}\label{alg:deltad}
 \begin{algorithmic}
 \Require Domain $\Xd$, current solution $\xd^*$
 \Ensure $\delta_d \in \Z^{\dd}$
 \For{$i=1, \ldots, \dd$}
    \State $r_1 \sim \mathrm{Uniform}[0,1]$
    \State $r_2 \sim \mathrm{Uniform}[0,1]$ \Comment{Whether to increase or decrease $x_i$, the $i$-th element of $\xd^*$}
    \State $p = 1/(\dc+\dd)$
    \While{$r_1 < p$}
    \If{$x_i=l_i$}
        $x_i\leftarrow x_i+1$
    \ElsIf{$x_i=u_i$}
        $x_i\leftarrow x_i-1$
    \Else
        \If{$r_2<0.5$}
            $x_i\leftarrow x_i+1$
        \Else
            \ $x_i\leftarrow x_i-1$
        \EndIf
    \EndIf
    \State $r_1 \leftarrow 2r_1$
 \EndWhile
 \EndFor
\end{algorithmic}
\end{algorithm}

For the continuous variables, we use the procedure from~\cite{DONEpaper}, adding a random variable $\delta_c\in \R^{\dc}$ to $\xc^*$. For each continuous variable $\xc[i]$, $\delta_c$ is zero-mean normally distributed with a standard deviation of $0.1|\Xc[i]|/\sqrt{\dc+\dd}$.
The exploration step for both integer and continuous variables is done in such a way that the solution stays within the bounds $\Xc,\Xd$.

\section{
Additional experiments on synthetic benchmark functions}

In this section we 
show the results on some additional synthetic benchmarks with lower dimensions.

\subsubsection*{Func3C}
This benchmark was taken from~\cite[Sec. 5.1]{ru2019bayesian}. It has $3$ categorical and $2$ continuous variables.

\begin{figure}[tbp]
\centering
\includegraphics[width=0.99\columnwidth]{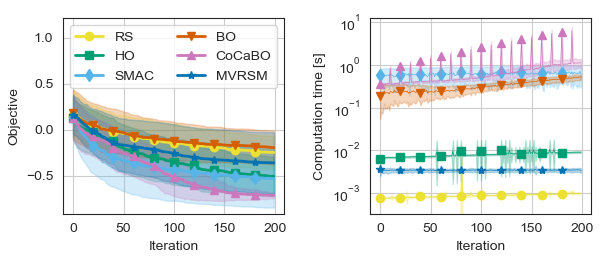}
\caption{Results on the func3C~\cite[Sec. 5.1]{ru2019bayesian} benchmark
($3$ categorical, $2$ continuous),
averaged over $100$ runs. The compared methods are random search (RS), HyperOpt (HO), SMAC, Bayesian optimisation (BO),  CoCaBO and MVRSM.
} \label{fig:func3C}
\end{figure}

Figure~\ref{fig:func3C} shows the results of $200$ iterations averaged over $100$ runs.
We have managed to reproduce the results from~\cite[Fig. 6(b)]{ru2019bayesian} for both HO (also called TPE) and CoCaBO.
Our result of SMAC is better here due to not using the default setting.
As this benchmark has categorical variables and was one of CoCaBO's benchmarks, we expect CoCaBO to perform best, which it does, though it uses more computation time than the other methods.

\subsubsection*{Rosenbrock10}
The Rosenbrock function%
\footnote{\label{fn:surjano}Details available at \url{https://www.sfu.ca/~ssurjano/optimization.html}
} %
is a standard benchmark in continuous optimisation that can be scaled to any dimension. 
For any dimension, the function has its global minimum in the point $(1,1,1,\ldots, 1)$, where it achieves the value $0$.
This benchmark has a dimension of $10$, but $3$ of the variables were adapted to integers in $\Xd = \{-2,-1,0,1,2\}^3$.
The $7$ remaining continuous variables were limited to $\Xc = [-2,2]^7$.
The function was scaled with a factor $1/300$, and uniform noise in $[0,10^{-6}]$ was added to every function evaluation.
This problem is of the same scale as the problem of gradient boosting hyperparameter tuning~\cite[Sec. 4(a)]{daxberger2019mixed}.

\begin{figure}[tbp]
\centering
\includegraphics[width=0.99\columnwidth]{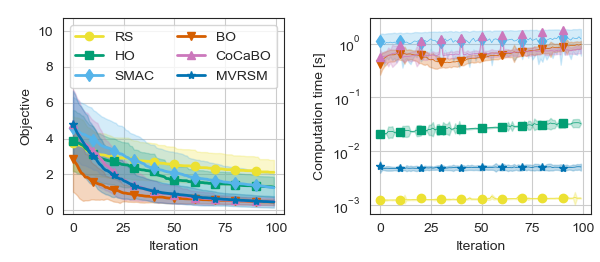}
\caption{Results on the Rosenbrock10 benchmark
($3$ integer, $7$ continuous), 
averaged over $100$ runs. 
This problem is of a similar scale as gradient boosting hyperparameter tuning~\cite[Sec. 4(a)]{daxberger2019mixed}.
} \label{fig:R10}
\end{figure}

Figure~\ref{fig:R10} shows the results of $100$ iterations averaged over $100$ runs.
Surprisingly, BO has the best performance, though it is much slower than MVRSM.
This method is typically used on continuous problems and widely assumed to be inadequate for discrete or mixed problems.
Here, we have experimentally shown that this is a false assumption.
MVRSM and CoCaBO get similar results as BO on this problem, with MVRSM being the most efficient.

\subsubsection*{MiVaBO synthetic function}

We also compare with one of the randomly generated synthetic test functions from~\cite[Appendix C.1, Gaussian weights variant]{daxberger2019mixed} .
This problem has $16$ variables of which $8$ integer and $8$ continuous.
No bounds were reported so we set them to $\Xd=\{0,1,2,3\}^8$ for the integer variables and $\Xc=[0,3]^8$ for the continuous variables.
We generated $8$ of these random functions and ran all algorithms $16$ times on each of them for $100$ iterations.

\begin{figure}[tbp]
\includegraphics[width=0.99\columnwidth]{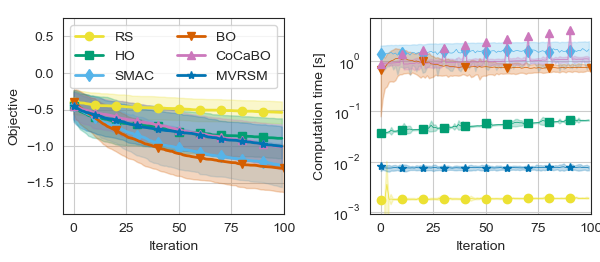}
\caption{Results on $8$ randomly generated MiVaBO synthetic benchmarks~\cite[Appendix C.1, Gaussian weights variant]{daxberger2019mixed} 
($8$ integer, $8$ continuous),
averaged over $16$ runs and over the $8$ different benchmarks.
} \label{fig:lm}
\end{figure}

Figure~\ref{fig:lm} shows the average over all $128$ runs.
Again, the standard BO algorithm performs best, which is a result that was not concluded in~\cite{daxberger2019mixed}.

\end{document}